\author{
	Jiashi~Feng
	\\
	EECS Department \& ICSI\\
	UC Berkeley\\
	\texttt{jshfeng@berkeley.edu} \\ \\
	Huan~Xu \\
	ME Department \\
	National University of Singapore \\ 
	\texttt{mpexuh@nus.edu.sg} \\\\
	Shie~Mannor\\
	EE Department\\
	Technion\\
	\texttt{shie@ee.technion.ac.il} \\
}
\newtheorem{theorem}{Theorem}
\newtheorem{lemma}{Lemma}
\newtheorem{definition}{Definition}
\newtheorem{remark}{Remark}
\title{Distributed Robust  Learning}
\begin{document}
	
\maketitle

\begin{abstract}
We propose a framework for distributed robust statistical learning on {\em big contaminated data}.
The Distributed Robust Learning (DRL) framework can reduce the computational time of traditional robust learning methods by several orders of magnitude. We analyze the robustness property of DRL, showing that DRL not only preserves the robustness of the base robust learning method, but also tolerates contaminations on a constant fraction of results from computing nodes (node failures). More precisely, even in presence of the most adversarial outlier distribution over computing nodes, DRL still achieves a breakdown point of at least  $ \lambda^*/2 $, where $ \lambda^* $ is the break down point of corresponding centralized algorithm. This is in stark contrast with naive division-and-averaging implementation, which may reduce the breakdown point by a factor of $ k $ when  $ k $ computing nodes are used. We then specialize the DRL framework for two concrete cases: distributed robust principal component analysis and distributed robust regression. We demonstrate the efficiency and the robustness advantages of DRL through comprehensive simulations and predicting image tags on a  large-scale image set.

\end{abstract}


\section{Introduction}
\label{sec:introduction}

In the modern era of big data, traditional statistical learning  methods are facing two most significant challenges: (1) how to scale   current machine learning methods to the large-scale data? And (2) how to obtain accurate inference results when the data are noisy and may even contain malicious outliers? These two important challenges naturally leads to a need for  developing   \emph{scalable robust} learning methods.

In robust learning~\cite{huber2011robust}, a statistician receives $n+n_1$ samples of the form $\{\mathbf{x}_i\}_{i=1}^{n+n_1}$ for unsupervised learning or  $\{(\mathbf{x}_i,y_i)\}_{i=1}^{n+n_1}$ for supervised learning. Here $\mathbf{x}_i \in \mathbb{R}^p$ is an observation and $y_i \in \mathbb{R}$, if exists, is a real-valued response of $\mathbf{x}_i$. Among the $n+n_1$ data points, $n_1$ of them may be corrupted by gross noise or are possibly malicious outliers. The goal of robust learning is then to estimate the  parameter $\theta \in \Theta$ of interest, even though a constant fraction of outliers may exist in the data.

An example {\em par excellence} of robust learning is robust principal component analysis (RPCA)~\cite{HR-PCA,candes2011robust,feng2012robust}. RPCA aims to estimate the low-dimensional subspace fitting the inliers $\{\mathbf{x}_i\}_{i=1}^{n}$ while being resistant to the negative effect of the outliers $\{\mathbf{x}_i\}_{i=n+1}^{n+n_1}$. Another example is robust linear regression (RLR)~\cite{chen2013robust,loh2012high,chen2013noisy}, where the inlier samples $(\mathbf{x}_i,y_i)$ can be described by the linear model $y_i=\theta^\top\mathbf{x}_i$. Similar to RPCA, RLR aims to estimate the regression parameter $\theta$ without being affected by the outliers. Notice that we use the term``inlier'' to denote authentic samples generated according to the underlying statistics rule, which does not necessarily imply these samples are closer to the origin.

Traditional robust learning methods generally rely on optimizing certain robust statistics~\cite{maronna1998robust,HR-PCA} or applying some sample trimming strategies~\cite{donoho1992breakdown,feng2012robust}, whose calculations require loading all the samples into the memory or going through the data multiple times~\cite{feng2013stochastic}. Thus, the computational time of those robust learning methods is usually at least linearly dependent on the size of the sample set, $n+n_1$. For example, in RPCA~\cite{HR-PCA}, the computational time is $O((n+n_1)p^2r)$ where $r$ is the intrinsic dimension of the subspace and $ p $ is the ambient dimension. In robust linear regression~\cite{chen2013noisy}, the computational time is super-linear on the sample size: $O(p(n+n_1)\log(n+n_1))$. This rapidly increasing computation time becomes a major obstacle for applying robust learning methods to big data in practice, where the sample size easily reaches the terabyte or even petabyte scale.

In recent years, along with the rapid  increase of data, distributed learning methods become popular and necessary. Among them, one of the most popular  is simply  map-reduce~\cite{dean2008mapreduce}  (aka.\ divide-and-fusion): The data are uniformly distributed over several parallel machines and the computation results from the machines are simply fused by taking their average. Such map-reduce framework is able to shorten the computation time by several orders with negligible communication cost. However, naively implementing the robust learning algorithms in such a map-reduce framework could destroy the robustness of the algorithm. Besides the existence of outliers, distributed learning itself demands robustness as latency or breakdown of computing nodes as well as communication errors are unavoidable.

In this work, we propose a generic framework for Distributed Robust Learning (DRL) to efficiently process big data yet preserving robustness. The implementation of DRL follows the strategy similar to map-reduce~\cite{dean2008mapreduce}: DRL first distributes all the samples evenly onto $k$ machines. Then it implements a base robust learning algorithm to generate an estimate on each of the $k$ machines. Finally, it merges the $k$ individual estimates via an efficient and robust aggregation operation.  The framework is compatible with any existing robust learning methods and is able to enhance their computational efficiency with a constant factor of at least $k$ with guaranteed robustness.

A parallel implementation potentially allows significant computational speed-ups and hence the ability to cope with big data. Yet, data communication   between machines is typically slower than their processing speeds and can be a bottleneck.
Here we specify the benefits of DRL on reducing the memory usage and computation time, and show its communication cost is negligible. (1) \emph{Communication cost}. In the sample division step, samples can be directly assigned to corresponding machines without communication between different machines. In the aggregation step, the $i$th machine only needs to send its estimate $\widehat{\theta}_i$ to a specific machine. Therefore, the total communication cost of DRL is only $k\cdot s$, where $s$ is the size of $\widehat{\theta}_i$. (2) \emph{Memory cost}. Each machine needs to store samples with the size of $p (n+n_1)/k$. Compared with a single machine case, the memory cost of each machine is reduced by a factor of $k$. (3) \emph{Computation time}. Since the sample size on each machine is reduced by a factor of $k$ and the computational cost of base robust learning method is (super)-linear with respect to the sample size. The computation time on each machine may also be reduced by a factor of at least $k$.

Besides its obvious advantage of enhancing the computation efficiency for handling big data,  DRL incurs negligible robustness loss compared to the centralized robust learning methods. Suppose the breakdown point of the centralized method is $ \lambda^* $, even under the most adversarial outlier distribution, DRL still achieves a breakdown point lower bounded by $ \lambda^*/2 $. In contrast, naive division-averaging will reduce the algorithmic breakdown point\footnote{Definition of breakdown point is given in Definition \ref{def:breakdown-pt}} to $ \lambda^*/k $, when running on $ k $ machines. Thus DRL is significantly more robust, which makes it more appealing than division-averaging in practice. Though performing data permutation before division-averaging possibly helps preserve the robustness, it brings prohibitive computational overhead and thus might be not so practical.

Moreover, DRL can offer additional robustness to errors that are caused by  a machine computation, breakdown, latency or communication error. For instance, though  machines having too many outliers may individually break down, DRL is able to tolerate their bad performances and take advantage of the machines having less outliers to get better estimation. We will make this claim precise in the following sections and provide two concrete examples: distributed robust PCA and distributed robust regression.

\subsection{Related Works}
Distributed learning methods can be roughly divided into two categories: the decentralized gossip-type algorithms~\cite{boyd2006randomized} and the algorithms constructed on the map-reduce framework~\cite{dean2008mapreduce}. Gossip algorithms do not require a center node to aggregate the results, but suffer from a high communication cost.

The most relevant methods to the proposed DRL are the divide-and-conquer methods~\cite{zhang2012communication,zhang2013divide,mackey2011divide}, which are built on the map-reduce framework. In those works, similar to us, the samples are  evenly distributed on the machines and processed in parallel. However, those methods take the simple average of the estimates as the final output, which is not robust to corruption~\cite{huber2011robust}. If one machine breaks down, the final estimation can be arbitrarily bad.

Recently, several online robust learning algorithms were proposed to process the data in a sequential manner~\cite{feng2013stochastic}. Online learning methods partially mitigate the scalability issue of robust learning by reducing the memory cost for machines. However, the time complexity of those methods still depends linearly on the sample size,  hardly affordable in practice when dealing with ultra-large datasets.

\section{Preliminaries}
\label{sec:framework}

\subsection{Problem Setup}
We consider a set of $n+n_1$ observation samples $X = \{\mathbf{x}_1,\ldots,\mathbf{x}_{n+n_1}\} \subset \mathbb{R}^{p}$, which contains a mixture of $n$ authentic samples and $n_1$ outliers. Here $p$ is the dimension of the samples. The authentic samples are generated according to an underlying model (\emph{i.e.}, ground truth)  parameterized by $\theta^\star \in \Theta$, where $ \Theta $ is a  space of finite dimension. The target of statistical learning is to estimate the model parameter $ \theta^\star $ according to the provided observations.

For instance, for the problem of Principal Component Analysis (PCA), an authentic sample $ \mathbf{x} $ is   generated from a rank-$d $  matrix $\theta^\star \in\mathbb{R}^{p\times d}$ with $ d \ll p $ in the form of $\mathbf{x} = \theta^\star\mathbf{z} + \mathbf{e}$. Here $\mathbf{z}\in\mathbb{R}^{d}$ is the underlying signal and $\mathbf{e}\in\mathbb{R}^p$ denotes an additive noise to the observation. The target of PCA is to find a $ d $-dimensional  subspace of observations $ \mathbf{x} $, close to the column space of $ \theta^\star $. Another example of statistical learning is the Linear Regression (LR) problem, where the observations are pairs of covariate-response $(\mathbf{x},y)\in \mathbb{R}^p\times \mathbb{R}$. The authentic samples follow the linear model parameterized by $\theta^\star \in\mathbb{R}^p$ in the form of $y=\langle \theta^\star, \mathbf{x}\rangle + {e}$, where ${e} \in \mathbb{R}$ again denotes the additive noise. LR aims at learning a parameter $ \theta $, close to $ \theta^\star$, to best explain the observations and  provide accurate predictions for new samples.

Statistical learning with iid data is a  well understood problem and many successful learning algorithms have been developed for it. However, the existence of the outliers in the observations, which can be \emph{arbitrarily} corrupted or even \emph{maliciously} chosen in particular, makes statistical learning from these contaminated observations challenging. In this work, we focus on the  case where a constant fraction of the observations are outliers, and  we use $\lambda \triangleq {n_1}/{(n+n_1)}$ to denote this  outlier fraction throughout the paper.

\subsection{Geometric Median}
We briefly introduce the geometric median in this section, a core concept in developing the proposed distributed robust learning framework.

Geometric median, also called $ \ell_1 $-median and spatial median, is a direct generalization of the standard median proposed by~\cite{haldane1948note} and its properties have been studied in details in~\cite{kemperman1987median}. Geometric median can be defined even if the random variable does not have a finite first order moment and, most importantly, it has strong robustness properties with a breakdown of $ 0.5 $ \cite{huber2011robust}.

In particular, let $ \mathcal{H} $ be a separable Hilbert space, such as $ \mathbb{R}^d $ or $ L^2(I) $ for some closed interval $ I \subset \mathbb{R} $. We denote by $ \langle \cdot, \cdot \rangle $ its inner product and by $ \|\cdot\| $ the associated norm. The geometric median $ \widetilde{\theta} $ of a random variable $ \Theta $ taking values in $ \mathcal{H} $ is defined in~\cite{kemperman1987median}:
\begin{equation}
\widetilde{\theta} := \arg\min_{y \in \mathcal{H}} \mathbb{E}_{\Theta} \left[\|\Theta-y\|-\|\Theta\|\right].
\end{equation}
In practice, we usually consider the following empirical version of geometric median, where $ \Theta $ admits a uniform distribution on a collection of $ k $ atoms $ \theta_1,\ldots,\theta_k \in \mathcal{H} $ (which will later correspond to $ k $ individual estimations from $ k $ different machines)~\cite{minsker2013geometric}.

\begin{definition}[Geometric Median]
	\label{def:median}
	Given a finite collection of observations $\theta_1,\ldots,\theta_k$ of $\Theta$, the geometric median is the point which minimizes the total distance to all the given points, \emph{i.e.},
	\begin{equation*}
	\widetilde{\theta} = \mathrm{median}(\theta_1,\ldots,\theta_k) := \underset{y\in\mathcal{H}}{\arg\min}\sum_{j=1}^k\|y-\theta_j\|.
	\end{equation*}
\end{definition}
Geometric median (Definition \ref{def:median}) exists under rather general conditions.
Calculating the geometric median is a convex optimization problem, where any off-the-shelf convex problem solver can be employed. Due to the space limitation, we omit the optimization details.

An important property of the geometric median is that it aggregates a collection of independent estimates into a single estimate with significantly stronger concentration properties, even in presence of a constant fraction of outlying estimates in the collection.
The following lemma, straightforwardly derived from Lemma 2.1 in~\cite{minsker2013geometric}, characterizes the robustness property of the geometric median.
\begin{lemma}[Robustness of the Geometric Median]
	\label{lemma:median}
	Let $\widetilde{\theta}$ be the geometric median of the points $\theta_1,\ldots,\theta_k \in \Theta$. Fix $\gamma \in \left(0,\frac{1}{2}\right)$ and $C_\gamma = (1-\gamma)\sqrt{\frac{1}{1-2\gamma}}$. Suppose
	there exists a subset $J\subseteq \{1,\ldots,k\}$ of cardinality $|J|>(1-\gamma) k$ such that for all $j\in J$ and any point $\theta^*\in \Theta$, $\|\theta_j -\theta^*\| \leq r$. Then we have $\|\widetilde{\theta} - \theta^*\| \leq C_\gamma r$.
\end{lemma}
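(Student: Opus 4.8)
\emph{The plan} is to combine the first-order optimality of the geometric median with a sharp trigonometric estimate. Write $r_*:=\|\widetilde{\theta}-\theta^*\|$. Since $C_\gamma=(1-\gamma)/\sqrt{1-2\gamma}\ge 1$ for every $\gamma\in(0,\tfrac12)$ (because $(1-\gamma)^2\ge 1-2\gamma$ and $1-2\gamma>0$), the bound is trivial when $r_*\le r$, so from here on I assume $r_*>r$. Then $\widetilde{\theta}\ne\theta^*$; set $w:=(\widetilde{\theta}-\theta^*)/r_*$, a unit vector. For an index $j$ with $\theta_j\ne\widetilde{\theta}$, write $\cos\phi_j:=\bigl\langle w,\,(\widetilde{\theta}-\theta_j)/\|\widetilde{\theta}-\theta_j\|\bigr\rangle$; geometrically, $\phi_j$ is the angle at the vertex $\widetilde{\theta}$ in the triangle with vertices $\widetilde{\theta},\theta^*,\theta_j$, which lies in the plane spanned by those three points.

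\emph{Step 1 (optimality).} Since $\widetilde{\theta}$ minimizes the convex function $F(y)=\sum_{j=1}^k\|y-\theta_j\|$, we have $0\in\partial F(\widetilde{\theta})$; writing $\partial F(\widetilde{\theta})$ as a sum of subdifferentials of norms (a unit ball for each index with $\theta_j=\widetilde{\theta}$, a singleton otherwise) gives
\[
\Bigl\|\sum_{j:\,\theta_j\ne\widetilde{\theta}}\frac{\widetilde{\theta}-\theta_j}{\|\widetilde{\theta}-\theta_j\|}\Bigr\|\ \le\ m,\qquad m:=\bigl|\{\,j:\theta_j=\widetilde{\theta}\,\}\bigr|
\]
(if no $\theta_j$ equals $\widetilde{\theta}$ this is just stationarity, with the sum equal to $0$). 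Taking the inner product of that vector with $w$ and applying Cauchy--Schwarz yields
\[
\sum_{j:\,\theta_j\ne\widetilde{\theta}}\cos\phi_j\ \le\ m .
\]

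\emph{Step 2 (the crux: an angle bound for good indices).} I claim $\cos\phi_j\ge\sqrt{1-r^2/r_*^2}$ for every $j\in J$. First, $j\in J$ forces $\theta_j\ne\widetilde{\theta}$, since $\|\theta_j-\theta^*\|\le r<r_*=\|\widetilde{\theta}-\theta^*\|$. Next,
\[
\bigl\langle\widetilde{\theta}-\theta^*,\,\widetilde{\theta}-\theta_j\bigr\rangle=r_*^2+\bigl\langle\widetilde{\theta}-\theta^*,\,\theta^*-\theta_j\bigr\rangle\ge r_*^2-r_*r>0,
\]
so $\cos\phi_j>0$, i.e.\ $\phi_j\in[0,\tfrac\pi2)$. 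Applying the law of sines in the plane of the triangle $\widetilde{\theta}\,\theta^*\,\theta_j$ (with $\psi_j$ the angle at $\theta_j$: the side $\theta^*\theta_j$, of length $\le r$, is opposite $\phi_j$, while the side $\widetilde{\theta}\theta^*$, of length $r_*$, is opposite $\psi_j$) gives $\sin\phi_j=\|\theta^*-\theta_j\|\sin\psi_j/r_*\le r/r_*$, hence $\cos\phi_j=\sqrt{1-\sin^2\phi_j}\ge\sqrt{1-r^2/r_*^2}$. (The same estimate also drops out of minimizing $\cos\phi_j$ directly over all $\theta_j$ with $\|\theta^*-\theta_j\|\le r$, a one-parameter optimization, but the law-of-sines route is shorter.)

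\emph{Step 3 (combination).} Since $J\subseteq\{j:\theta_j\ne\widetilde{\theta}\}$ and each of the remaining $|J^c|-m$ non-coincident indices contributes at least $-1$, Steps 1 and 2 give
\[
m\ \ge\ \sum_{j:\,\theta_j\ne\widetilde{\theta}}\cos\phi_j\ \ge\ |J|\sqrt{1-r^2/r_*^2}-\bigl(|J^c|-m\bigr),
\]
so $|J|\sqrt{1-r^2/r_*^2}\le|J^c|$. Using $|J|>(1-\gamma)k$ and $|J^c|=k-|J|<\gamma k$ this gives $\sqrt{1-r^2/r_*^2}<\gamma/(1-\gamma)$; squaring and rearranging (with $(1-\gamma)^2-\gamma^2=1-2\gamma>0$) yields $r_*^2/r^2<(1-\gamma)^2/(1-2\gamma)=C_\gamma^2$, i.e.\ $\|\widetilde{\theta}-\theta^*\|\le C_\gamma r$, as desired. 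The one genuinely delicate point is the angle bound in Step 2: the crude substitute $\cos\phi_j\ge(r_*-r)/(r_*+r)$, obtained from the triangle inequality alone, would only deliver the weaker constant $1/(1-2\gamma)$, so recovering the stated $C_\gamma=(1-\gamma)/\sqrt{1-2\gamma}$ really hinges on the sharp planar estimate (equivalently, on Lemma~2.1 of~\cite{minsker2013geometric}).
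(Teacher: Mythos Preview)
Your argument is correct and complete. The paper itself does not give a proof of this lemma; it simply states that the result is ``straightforwardly derived from Lemma~2.1 in~\cite{minsker2013geometric}.'' What you have written is in fact a self-contained version of Minsker's own proof of that lemma: the same first-order optimality condition for the geometric median, the same reduction to the planar triangle $\widetilde{\theta}\,\theta^*\,\theta_j$, and the same sharp bound $\cos\phi_j\ge\sqrt{1-r^2/r_*^2}$ that yields the constant $C_\gamma=(1-\gamma)/\sqrt{1-2\gamma}$. So your approach matches the cited source exactly; the only difference is that you spell out the details the paper leaves to the reference, and you correctly note that the cruder triangle-inequality estimate would only give $1/(1-2\gamma)$.
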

In words, given a set of points, their geometric median will be close to the ``true'' $\theta^*$ as long as at least half of them are close to  $\theta^*$. In particular, the geometric median will not be skewed severely even if some of the points deviate significantly away from  $\theta^*$.

\section{Distributed Robust Learning}

In this section, we present the implementation details and  main results for the proposed Distributed Robust Learning (DRL) framework. Specific examples of how to apply  DRL for concrete problems are given in the next sections.

\subsection{The DRL Framework}
The proposed DRL framework   follows a  standard division-and-conquer strategy to reduce the communication cost in distributed learning. Its core technique  is to compute the geometric median, as defined in Definition \ref{def:median},  in the fusion step to aggregate the estimates from different machines.

Suppose there are $k$ machines ready to use for distributed computation.
In the \emph{division} step, DRL evenly divides the sample set $\{\mathbf{x}_1,\ldots,\mathbf{x}_{n+n_1}\}$ into $k$ subsets. Each subset, denoted as $X_j$ of size $\lfloor (n+n_1)/k \rfloor$, is distributed onto its corresponding machine $ j $. Then these $ k $ machines run an appropriate base robust learning algorithm \emph{in parallel} for the specific problem to solve (\emph{e.g.}, robust principal component analysis or robust linear regression), to get their individual estimations of the parameter $\theta$, denoted as $\widehat{\theta}_j$. Then in the \emph{fusion} step, these $k$ estimates $\widehat{\theta}_1,\ldots,\widehat{\theta}_k$ are communicated to and aggregated on a single machine, which can be either a separate one or any machine out of the $ k $ running ones. An aggregation operation  follows to combine the $k$ estimations into  the final estimation $\widetilde{\theta}$.

Previous division-and-conquer distributed learning works~\cite{zhang2012communication,zhang2013divide,mackey2011divide} commonly propose to aggregate the $k$ separate estimations by taking their average, \emph{i.e.},
$\bar{\theta} = \frac{1}{k} \sum_{j=1}^k \widehat{\theta}_j$,
in order to reduce the variance in the estimation.
However, this average aggregation is  fragile to outlying estimations, with zero breakdown point~\cite{huber2011robust}. Specifically,  only one outlying estimate $\widehat{\theta}_j$ (which  may be caused by too many outliers in $X_j$, the breakdown or latency of machine $ j $, or  errors in communicating  $\widehat{\theta}_j$) will lead to arbitrary bad performance of the  aggregated final estimation $\bar{\theta}$. Thus, those averaging-based methods are not robust to handle sample outliers, machine breakdown or communication error, which is not unusual in practical distributed learning.

As a concrete example, suppose the outliers fractions on different machines can differ. Let the breakdown point of the base algorithm be $ \lambda^* $, \emph{i.e.}, the algorithm is able to tolerate up to $ \lambda^* $ of samples being outliers. Then in the worst case (where all outliers concentrate on a single machine), the breakdown point of the learning algorithm will be reduced to $ \lambda^*/k $ due to  non-robust aggregation operation.

Instead of averaging the estimations, DRL performs the estimates aggregation  by calculating their \emph{geometric median}, in order to take advantage of its robustness as demonstrated in Lemma \ref{lemma:median}.  Suppose we have a base robust learning algorithm, denoted as $\mathrm{RL}(X,\eta)$ with input $X$ and an algorithmic parameter $\eta$. DRL provides a distributed implementation of $\mathrm{RL}(X,\eta)$, as shown in Algorithm~\ref{alg:drl}.

\begin{algorithm}
   \caption{DRL framework}
   \label{alg:drl}
\begin{algorithmic}
   \STATE {\bfseries Input:} Sample matrix $X = [\mathbf{x}_1,\ldots,\mathbf{x}_{n+n_1}] \in \mathbb{R}^{p\times (n+n_1)}$, number of available machines $k$. Base robust learning algorithm $\mathrm{RL}(\cdot,\eta)$ with parameter $ \eta $.
   \STATE Column-wisely divide $X$ evenly into $k$ submatrices: $X=[X_1,\ldots,X_k]$.
   \STATE \textbf{Do in parallel}
   \STATE $\widehat{\theta}^{(1)} = \mathrm{RL}(X_1,\eta)$
   \STATE \hspace{8mm} $\vdots$
   \STATE $\widehat{\theta}^{(k)} = \mathrm{RL}(X_k,\eta)$
   \STATE {\bfseries Output:} $\widetilde{\theta}=\mathrm{median}\left(\widehat{\theta}^{(1)},\ldots,\widehat{\theta}^{(k)}\right)$.
\end{algorithmic}
\end{algorithm}

\subsection{Main Results: Robustness of DRL}
 We now present the robustness guarantee of DRL.
 In particular, based on the robustness property of geometric median in Lemma \ref{lemma:median}, we   obtain the following results to characterize the robustness of DRL to the corrupted estimations on a fraction of {\em machines}.
Before presenting the details, we define the following necessary quantities: for $0<\beta<\alpha<{1}/{2}$, define  their divergence as $\psi(\alpha;\beta) = (1-\alpha)\log\frac{1-\alpha}{1-\beta}+\alpha\log\frac{\alpha}{\beta}$ and let
 $ \beta^* = \beta^*(\alpha) := \max\{\beta\in(0,\alpha):\psi(\alpha,\beta)\geq 1\}.$



\begin{theorem}[Robustness Property of DRL]
\label{theo:median_robust}
Fix $\alpha \in (0,1/2)$. Assume  $\theta^\star \in \Theta$ is the ground truth parameter. Let $\widehat{\theta}_1,\ldots,\widehat{\theta}_k \in \Theta$ be a collection of independent estimations of $\theta^\star$ from $k$ machines.
Assume the estimations from $(1-\gamma)k$ machines, where $0\leq \gamma < \frac{\alpha-\beta^*}{1-\beta^*}$, satisfy $\mathbb{P}(\|\widehat{\theta}_j - \theta^\star \|>\varepsilon) \leq \beta^* < \alpha$ and the estimations from the other $\gamma k$ machines are corrupted \emph{arbitrarily}. Let $\widetilde{\theta}$ be the output of DRL. Then
\begin{equation*}
  \mathbb{P}\left(\|\widetilde{\theta} - \theta^\star \|< C_\alpha \varepsilon\right) \geq 1- \exp\left(-k(1-\gamma)\kappa_\gamma \right),
\end{equation*}
where $C_\alpha=(1-\alpha)\sqrt{\frac{1}{1-2\alpha}}$, and $ \kappa_\gamma =  \psi\left(\frac{\alpha-\gamma}{1-\gamma},\beta^*\right)$.
\end{theorem}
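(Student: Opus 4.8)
The plan is to combine the deterministic robustness bound for the geometric median (Lemma~\ref{lemma:median}) with a concentration argument that controls how many of the ``good'' machines actually produce an estimate within $\varepsilon$ of $\theta^\star$. The key observation is that Lemma~\ref{lemma:median} requires a \emph{strict majority} (indeed a $(1-\alpha)$-fraction) of the $k$ estimates to lie in a ball of radius $\varepsilon$ around $\theta^\star$; once that event holds, we immediately get $\|\widetilde{\theta}-\theta^\star\| \le C_\alpha \varepsilon$ with the stated constant. So the proof reduces to showing that, with the claimed exponentially high probability, at least $(1-\alpha)k$ of the estimates are $\varepsilon$-close.

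First I would set up the counting. Among the $(1-\gamma)k$ ``honest'' machines, each independently has probability at most $\beta^*$ of producing an estimate with $\|\widehat{\theta}_j - \theta^\star\| > \varepsilon$; the remaining $\gamma k$ machines are adversarial and we must assume \emph{all} of them are bad. Let $Z$ be the number of honest machines whose estimate is \emph{not} $\varepsilon$-close; then $Z$ is stochastically dominated by a $\mathrm{Binomial}((1-\gamma)k,\beta^*)$ random variable. The total number of bad estimates is at most $Z + \gamma k$, and the ``good majority'' hypothesis of Lemma~\ref{lemma:median} with parameter $\alpha$ fails only if $Z + \gamma k \ge \alpha k$, i.e. $Z \ge (\alpha-\gamma)k = (1-\gamma)k \cdot \frac{\alpha-\gamma}{1-\gamma}$. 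Since $\gamma < \frac{\alpha-\beta^*}{1-\beta^*}$ one checks that $\frac{\alpha-\gamma}{1-\gamma} > \beta^*$, so this is a genuine upper-tail deviation of the binomial above its mean.

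Next I would apply a Chernoff bound for the binomial upper tail. The standard relative-entropy form of the Chernoff bound gives, for a $\mathrm{Binomial}(m,\beta^*)$ variable and a threshold fraction $t > \beta^*$,
\begin{equation*}
\mathbb{P}\bigl(\mathrm{Bin}(m,\beta^*) \ge t m\bigr) \le \exp\bigl(-m\,\psi(t,\beta^*)\bigr),
\end{equation*}
where $\psi$ is exactly the divergence function defined just before the theorem. Plugging in $m = (1-\gamma)k$ and $t = \frac{\alpha-\gamma}{1-\gamma}$ yields the failure probability bound $\exp\bigl(-(1-\gamma)k\,\psi\bigl(\frac{\alpha-\gamma}{1-\gamma},\beta^*\bigr)\bigr) = \exp(-(1-\gamma)k\,\kappa_\gamma)$. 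Taking the complement gives the theorem. I would also note that the definition of $\beta^*$ (the largest $\beta$ with $\psi(\alpha,\beta)\ge 1$) is what guarantees $\kappa_\gamma > 0$ in the regime of interest, so the exponent is genuinely negative; this is worth a sentence to make the bound non-vacuous.

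The main obstacle — really the only non-routine point — is verifying the inequality $\frac{\alpha-\gamma}{1-\gamma} > \beta^*$ under the hypothesis $\gamma < \frac{\alpha-\beta^*}{1-\beta^*}$, so that Lemma~\ref{lemma:median} is applicable with the \emph{effective} outlier fraction and the Chernoff bound is in its valid (upper-tail) regime; this is a short algebraic manipulation (the map $\gamma \mapsto \frac{\alpha-\gamma}{1-\gamma}$ is decreasing and equals $\beta^*$ precisely at $\gamma = \frac{\alpha-\beta^*}{1-\beta^*}$). A secondary bookkeeping point is the stochastic-domination step: the adversarial machines can be correlated with anything, but since we pessimistically count all $\gamma k$ of them as failures, independence is only needed among the honest machines, which is given. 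Everything else is a direct substitution into Lemma~\ref{lemma:median} and the Chernoff inequality.
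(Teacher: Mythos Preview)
Your proposal is correct and follows essentially the same route as the paper, which simply states that the result is ``straightforward from Lemma~\ref{lemma:median} and Theorem~3.1 in~\cite{minsker2013geometric}''; you have unpacked precisely the argument behind that citation (geometric-median robustness plus a Chernoff/relative-entropy upper-tail bound on the number of failing honest machines). One small correction: the positivity of $\kappa_\gamma$ follows directly from your algebraic check that $\frac{\alpha-\gamma}{1-\gamma} > \beta^*$ (since $\psi(t,\beta^*)>0$ whenever $t\neq\beta^*$), not from the specific normalization in the definition of $\beta^*$.
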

Basically, Theorem~\ref{theo:median_robust} states that even when $\gamma k$ machines break down (either because their outliers outnumber the breakdown point of the base robust algorithm, or due to computing node failure and communication error), DRL still guarantees that the final estimation has bounded error $C_\alpha \varepsilon$ with high probability. The proof of above theorem is straightforward  from  Lemma~\ref{lemma:median} and  Theorem 3.1 in~\cite{minsker2013geometric}.

Note that the above function  $\psi(\beta,\alpha)$ is monotonically decreasing with $\beta$ and monotonically increasing with $\alpha$. The function $C_\alpha$ is monotonically increasing in $\alpha$, which accounts for the bound relaxation after taking the geometric median.

\begin{remark}
To more explicitly appreciate the results given in Theorem~\ref{theo:median_robust}, we provide some concrete results under specific values of $\beta^*$ and $\alpha$ in Table~\ref{tab:value_p}. From these, we  observe that $C_\alpha$ decreases rapidly with decreasing $\beta^*$. When the error bound holds with failure probability less than $\beta =1\times 10^{-3}$ for $(1-\gamma)k$ machines, the value of $C_\alpha$ is very close to $1$, which means DRL indeed retains an error bound almost same as the centralized method.
\end{remark}



\begin{table}[h]
\centering
\caption{Exemplar values of $\beta^*$, $\alpha$ and $C_\alpha$ in Theorem~\ref{theo:median_robust}.}
\label{tab:value_p}
\small
\begin{tabular}{ c | c c  c c }
  \hline
  $\beta^*$ & $1\times 10^{-2}$  & $1\times 10^{-3}$  &$1\times 10^{-4}$ &$1\times 10^{-5}$ \\
  \hline
  $\alpha$ & $0.358$  & $0.22$ & $0.156$ & $0.119$\\
  \hline
  $C_\alpha$ & $1.205$   & $1.04$ &$1.018$ & $1.009$\\
  \hline
\end{tabular}
\end{table}

\begin{remark}[Trade-off between efficiency and accuracy]
In Theorem~\ref{theo:median_robust}, there is implicit dependence between the overall accuracy and  the sample size in each machine through $\beta^*$. For example, in robust PCA~\cite{HR-PCA}, $\beta^* \sim e^{-cn}$ with $n$ being the sample size. Smaller sample size may lead to larger failure probability $\beta^*$. Therefore, we need to trade-off between the efficiency (which favors increasing the number of parallel machines, $k$) and accuracy (which favors decreasing $k$ to increase the sample size on each machine) in practice.
\end{remark}

In many real-world applications, such as time series data or images and videos, the outliers may not be uniformly\footnote{Here by ``uniform'', we mean  any subset of the observations sampled at random has an identical outlier fraction
in the large sample limit.} distributed among the observations. This non-uniformness of outlier distribution, yielding different  outlier fractions on the machines,  potentially affects the final performance of distributed learning algorithms in a negative way. This robustness loss is exactly the expense of gaining efficiency by distributed computation.

To understand the potential deterioration in the  robustness  that DRL can introduce, compared with the centralized robust learning algorithms, we provide a lower bound on DRL's breakdown point~--~a widely used robustness metric for an algorithm, defined as follows \cite{huber2011robust}. 

\begin{definition}[Breakdown Point]
	\label{def:breakdown-pt}
	Breakdown point is defined as the fraction of corrupted points (outliers) that can make the output of an algorithm arbitrarily bad.
\end{definition}
Some well known breakdown point arguments include \cite{huber2011robust}: the breakdown point of empirical average is known to be zero while the median has a maximal breakdown point of $ 0.5 $.

\begin{theorem}[Breakdown Point of DRL]
	Let $ \lambda^* $ denote the breakdown point of the employed base $ \mathrm{RL} $ method in DRL. Then the breakdown point of DRL, even in the presence of adversarial outlier distribution, is always lower bounded as $ \lambda_{\mathrm{DRL}} \geq \lambda^*/2 $.
\end{theorem}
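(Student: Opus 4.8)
\noindent\emph{Proof plan.}\quad The plan is to show that any overall outlier fraction $\lambda$ that is strictly smaller than $\lambda^*/2$ leaves the DRL output $\widetilde{\theta}$ confined to a finite ball around $\theta^\star$, no matter how adversarially the $n_1$ outliers are valued and assigned to the $k$ machines. Since the breakdown point is, by Definition~\ref{def:breakdown-pt}, the infimum of outlier fractions capable of driving the output to be arbitrarily bad, establishing this uniform boundedness for every $\lambda<\lambda^*/2$ immediately yields $\lambda_{\mathrm{DRL}}\ge\lambda^*/2$.

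First I would partition the machines. Writing $m=\lfloor(n+n_1)/k\rfloor$ for the per-machine sample size, call machine $j$ \emph{good} if the outlier fraction among its $m$ samples does not exceed $\lambda^*$, and \emph{bad} otherwise. By the very definition of $\lambda^*$ as the breakdown point of the base routine $\mathrm{RL}$, a good machine returns an estimate $\widehat{\theta}^{(j)}$ whose deviation $\|\widehat{\theta}^{(j)}-\theta^\star\|$ stays bounded over all admissible corruptions; let $R<\infty$ be a uniform such bound and put $r=\max_{j\ \mathrm{good}}\|\widehat{\theta}^{(j)}-\theta^\star\|\le R$. A bad machine must contain strictly more than $\lambda^* m$ outliers, so if $b$ denotes the number of bad machines under a given adversarial placement, then $b\,\lambda^* m< n_1=\lambda(n+n_1)$. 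Combining this with $n+n_1<k(m+1)$ gives $b<\lambda k/\lambda^*+\lambda k/(\lambda^* m)$, and hence $b<k/2$ once $\lambda<\lambda^*/2$ and the sample size is large enough that the rounding term is absorbed by the slack $\epsilon:=\tfrac12-\lambda/\lambda^*>0$. Thus strictly more than half of the machines are good.

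Next I would feed this into Lemma~\ref{lemma:median} with $\theta^*=\theta^\star$, the index set $J$ taken to be the good machines, and $\gamma$ chosen in $[\,b/k,\,1/2\,)$ --- possible precisely because $b<k/2$ --- so that $|J|=k-b>(1-\gamma)k$ and $\|\widehat{\theta}^{(j)}-\theta^\star\|\le r$ for every $j\in J$. The lemma then gives $\|\widetilde{\theta}-\theta^\star\|\le C_\gamma\,r$; since $\gamma$ can be kept bounded away from $1/2$ (using the slack $\epsilon$) and $r\le R$, this is a finite bound independent of the adversary's choices, so the aggregated estimate cannot be made arbitrarily bad whenever $\lambda<\lambda^*/2$, which is the asserted lower bound. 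One can also see the bound is essentially tight: concentrating roughly a $\lambda^*$ mass of outliers onto each of $\lceil k/2\rceil$ machines breaks a majority of them and thereby the geometric median; but this direction is not needed here.

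I do not anticipate a genuinely hard step --- the only substantive ingredients are the breakdown-point definition of $\lambda^*$ and the robustness of the geometric median from Lemma~\ref{lemma:median}. The one place requiring care is the combinatorial bookkeeping that converts ``global outlier fraction below $\lambda^*/2$'' into ``fewer than half the machines exceed local outlier fraction $\lambda^*$'', including the $\lfloor(n+n_1)/k\rfloor$ rounding (harmless in the large-sample, constant-fraction regime the paper works in) and the choice of $\gamma$ strictly between $b/k$ and $1/2$ needed for the strict inequality $|J|>(1-\gamma)k$ demanded by Lemma~\ref{lemma:median}.
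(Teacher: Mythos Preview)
Your proposal is correct and follows essentially the same approach as the paper: a pigeonhole/averaging argument shows that an overall outlier fraction below $\lambda^*/2$ cannot push more than half the machines past local fraction $\lambda^*$, and then the robustness of the geometric median controls the aggregate. You are in fact more careful than the paper on two points: you handle the $\lfloor (n+n_1)/k\rfloor$ rounding explicitly, and you invoke Lemma~\ref{lemma:median} (the deterministic statement) directly, whereas the paper cites Theorem~\ref{theo:median_robust} (its probabilistic corollary), even though only the deterministic lemma is actually needed here.
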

\begin{proof}
	
	Given an overall outlier fraction of $ \lambda^*/2 $, it is straightforward to see that at least $ 1/2 $ of the machines have outlier fractions no more than $\lambda^*$. Since $ \lambda^* $ is the breakdown point of the base $ \mathrm{RL} $ method, at least half of the output estimates from the machines have bounded deviation from the ground truth. Then applying Theorem \ref{theo:median_robust} guarantees that DRL, which takes the geometric median of the estimates, does not break down. Therefore, the breakdown point of DRL is at least $ \lambda^*/2 $.
	
	Then we show that this lower bound is actually tight. Let $ \epsilon, \epsilon^\prime > 0 $ be  very small positive numbers. Suppose the overall outlier fraction is $ (1/2 + \epsilon)(1+\epsilon^\prime)\lambda^* $ and outliers are distributed  among the $ k $ machines as follows: $ (1/2+\epsilon) k $ machines have  outliers fraction of $ (1+\epsilon^\prime)\lambda^* $ and $ (1/2-\epsilon) k $ machines only have inliers. In presence of such outlier distribution,  $ (1/2+\epsilon) k $ machines will break down, which then leads to breakdown of the DRL. Taking $ \epsilon,\epsilon^\prime \downarrow 0 $ provides  $ \lambda_{\mathrm{DRL}} \rightarrow \lambda^*/2 $.
	
	The above specific outlier distribution demonstrates the lower bound $ \lambda_{\mathrm{DRL}} \geq \lambda^*/2 $ is tight, and in fact it is the most adversarial distribution to DRL.
\end{proof}

	The above lower bound on its breakdown point makes DRL more appealing than simple averaging aggregation in practice. To see this, consider the case where a single machine out of $ k $ machines has all the outliers. Then only an outlier fraction of $ \lambda^*/k $ will break down the averaging. Averaging aggregation reduces the framework breakdown point to only $ \lambda_{\mathrm{Avg}}  = \lambda^*/k$~--~a severe robustness deterioration on the base learning algorithms.

\begin{remark}[Breakdown Point of DRL for Other Outlier Distributions]\label{rem.outlierdistribution}
	When outliers are uniformly distributed on the $ k $ machines, DRL preserves the breakdown point of the base learning algorithms, \emph{i.e.}, $ \lambda_{\mathrm{DRL}} =\lambda^* $. Besides, there exists a favorable outlier distribution for DRL: $ (1/2-\epsilon) k $ machines with $ 0< \epsilon < 1/2 $ only have outliers, and DRL is able to tolerate an outlier fraction up to $ (1/2 - \epsilon) + (1/2 + \epsilon)\lambda^*$. Asymptotically, $ \lim_{\epsilon \downarrow 0} \lambda_{\mathrm{DRL}} \rightarrow (1+\lambda^*)/2 $.
\end{remark}

Based on the above case studies, a natural alternative to DRL is to randomly permute the samples first and take the average of the estimates in the aggregation.	After the random permutation, outliers are uniformly distributed on the machines with high probability, and thus simply taking the average may also be  able to produce a robust estimator. However, such a strategy faces two critical problems in practice. First, in practice data may arrive directly to its respective computing nodes, and hence performing random permutation requires communicating all the data  to a central node which is often prohibitively expensive. Second, even if permutation is possible and hence non-uniform outlier distribution is no longer an issue,  the randomization-division-averaging strategy is  still fragile to machine breakdown, latency or communication error, as the averaging aggregation is not robust to such faults.

\section{Example I: Distributed Robust PCA}
In the following sections, we provide two concrete examples of DRL and their empirical evaluations  on both synthetic and real data sets, as well as comparisons with centralized and averaging-aggregation counterpart algorithms.

Classical principal component analysis (PCA) is known to be fragile to outliers and many robust PCA methods have been proposed so far (See~\cite{HR-PCA} and  references therein). However, most of those methods require to load all the data into memory and have computational cost (super-)linear in the sample size, which prevents them from being applicable for big data. In this section, we first develop a new robust PCA method which robustifies PCA via  a robust sample covariance matrix estimation, and then demonstrate how to implement it under the DRL framework to enhance the efficiency.

Given a sample matrix $X=[\mathbf{x}_1,\mathbf{x}_2,\ldots,\mathbf{x}_n]$, the standard covariance matrix is computed as $C=XX^\top$, \emph{i.e.}, $C_{ij} = \langle X_i, X_j\rangle,\forall i,j=1,\ldots,p$. Here $X_i$ denotes the $i$th row vector of matrix $X$. To obtain a robust estimate of the covariance matrix, we replace the vector inner product by a trimmed inner product, $\widehat{C}_{ij} = \langle X_i, X_j\rangle_{n_1}$, as detailed in Algorithm~\ref{alg:trimmed_ip}. Intuitively, the trimmed inner product removes the outliers having large magnitude and the remaining outliers are bounded by inliers. Thus, the obtained covariance matrix is close to the inlier sample covariance matrix.

\begin{algorithm}[h]
   \caption{Trimmed inner product $\langle\mathbf{x},\mathbf{x}' \rangle_{n_1}$}
   \label{alg:trimmed_ip}
\begin{algorithmic}
   \STATE {\bfseries Input:} Two vectors $\mathbf{x} \in \mathbb{R}^N$ and $\mathbf{x}' \in \mathbb{R}^N$, trimmed parameter $n_1$.
   \STATE Compute $q_i = \mathbf{x}_i\mathbf{x}'_i, i=1,\ldots,N$.
   \STATE Sort $\{|q_i|\}$ in ascending order and select the smallest $(N-n_1)$ ones.
   \STATE Let $\Omega$ be the set of selected indices.
   \STATE {\bfseries Output:} $h=\sum_{i\in\Omega}q_i$.
\end{algorithmic}
\end{algorithm}

After obtaining the robust estimation of covariance matrix, we perform a standard eigenvector decomposition on the covariance matrix to produce the principal component estimations. The details of the proposed base RPCA algorithm are given in Algorithm \ref{alg:rpca}.
\begin{algorithm}[h]
   \caption{Base RPCA}
   \label{alg:rpca}
\begin{algorithmic}
   \STATE {\bfseries Input:} Sample matrix $X = [\mathbf{x}_1,\ldots,\mathbf{x}_{m}] \in \mathbb{R}^{p\times m}$, outlier fraction $\lambda$, subspace dimension $d$.
   \STATE Compute the trimmed covariance matrix $\widehat{C}$ as follows (see Algorithm \ref{alg:trimmed_ip}):
   \begin{equation*}
     \widehat{C}_{ij} = \langle X^i,X^j \rangle_{\lambda m},\forall i,j=1,\ldots,p.
   \end{equation*}

   \STATE Perform eigen decomposition on $\widehat{C}$ and take the eigenvectors corresponding to the largest $d$ eigenvalues $\widehat{W}_d = [\widehat{\mathbf{w}}_1,\ldots,\widehat{\mathbf{w}}_d]$.
   \STATE {\bfseries Output:} $\widehat{W}_d$.
\end{algorithmic}
\end{algorithm}

\begin{algorithm}[h]
   \caption{DRL-RPCA}
   \label{alg:dc_rpca}
\begin{algorithmic}
   \STATE {\bfseries Input:} Sample matrix $X = [\mathbf{x}_1,\ldots,\mathbf{x}_{n+n_1}] \in \mathbb{R}^{p\times (n+n_1)}$, computing node number $k$, subspace dimension $d$, outlier fraction $\lambda = \frac{n_1}{n+n_1}$  (or set as $ 0.5 $ if unknown).
   \STATE Column-wisely divide $X$ evenly into $k$ sub-matrices: $X=[X_1,\ldots,X_k]$. Each sub-matrix has $ \lfloor (n+n_1)/k \rfloor $ column vectors.
   \STATE \textbf{Do in parallel}
   \STATE $\widehat{W}^{(1)} = \mathrm{RPCA}(X_1,d,\lambda)$
   \STATE \hspace{8mm} $\vdots$
   \STATE $\widehat{W}^{(k)} = \mathrm{RPCA}(X_k,d,\lambda)$
   \STATE \textbf{End parallel}
   \STATE Communicate $\widehat{W}^{(1)},\ldots,\widehat{W}^{(k)}$ to an aggregating machine.
   \STATE Calculate $\widehat{\mathrm{Proj}_d}^{(i)} = \widehat{W}^{(i)}\{\widehat{W}^{(i)}\}^{\top},i=1,\ldots,k$.
   \STATE {\bfseries Output:} $\widetilde{\mathrm{Proj}_d}=\mathrm{median}\left(\widehat{\mathrm{Proj}_d}^{(1)},\ldots,\widehat{\mathrm{Proj}_d}^{(k)}\right)$
\end{algorithmic}
\end{algorithm}
Plugging the proposed robust PCA method into the DRL framework gives the distributed robust PCA method, as shown in Algorithm~\ref{alg:dc_rpca}.

 We remark that in implementations, if outlier fraction $ \lambda $ is unknown, we can simply set $ \lambda=0.5 $. Moreover, in robust PCA, we cannot directly take median of the output {\em eigenvectors}. This is because the eigenvectors may rotate arbitrarily while still span the same subspace. Thus the eigenvector output can be correct but still arbitrarily far away from the ground truth.

Theorem~\ref{theo:rpca} provides a robustness guarantee for the DRL-RPCA. Due to the space limitation, the proof is provided in the supplementary material. 
\begin{theorem}[Performance Guarantee for DRL-RPCA]
\label{theo:rpca}
Suppose samples are divided onto $k$ machines, and their  outlier fractions are $\lambda_1,\ldots,\lambda_k $ respectively.
Assume the authentic samples follow sub-Guassian design with parameter $ \sigma_x^2 $.
Samples are divided onto $k$ machines. Let $\Delta_d = \sigma_d - \sigma_{d+1}$, where $\sigma_d$ denotes the $d$th largest eigenvalue of ground-truth sample covariance matrix $C$.
Let $\widetilde{\mathrm{Proj}_d}$ be the output of DRL-RPCA, and $0\leq \gamma<\frac{\alpha-\beta^*}{1-\beta^*}$. Then with a probability of at least $1- \exp\left(-k(1-\gamma)\psi\left(\frac{\alpha-\gamma}{1-\gamma},\beta^*\right)\right)$, we have
\begin{equation*}
\|\widetilde{\mathrm{Proj}_d} -\mathrm{Proj}_d^\star \|_F \leq  C_\alpha \frac{2}{\Delta_d} \frac{\lambda^\prime}{1-\lambda^\prime} \sigma_x^2 \log p,
\end{equation*}
where $ C_\alpha = (1-\alpha)\sqrt{\frac{1}{1-2\alpha}} $, and $ \lambda^\prime  $ is the $ \lfloor k (1-\gamma) \rfloor $ smallest outlier fraction in $ \{\lambda_1,\ldots,\lambda_k\} $.

\end{theorem}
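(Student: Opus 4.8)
\emph{Proof proposal.} The plan is to split the argument into a \emph{per-machine} analysis of the base RPCA routine (Algorithm~\ref{alg:rpca}), followed by a direct invocation of the geometric-median robustness result, Theorem~\ref{theo:median_robust}, carried out in the Hilbert space of symmetric $p\times p$ matrices equipped with the Frobenius inner product. In that space the projectors $\widehat{\mathrm{Proj}_d}^{(i)}$ and $\mathrm{Proj}_d^\star$ are genuine points and the geometric median of Definition~\ref{def:median} is well defined, so Lemma~\ref{lemma:median} and Theorem~\ref{theo:median_robust} apply with $\|\cdot\|=\|\cdot\|_F$.

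First I would establish the single-machine guarantee: for a machine $j$ holding $m=\lfloor(n+n_1)/k\rfloor$ samples with outlier fraction $\lambda_j$,
\[
\bigl\|\widehat{\mathrm{Proj}_d}^{(j)}-\mathrm{Proj}_d^\star\bigr\|_F \;\le\; \frac{2}{\Delta_d}\,\frac{\lambda_j}{1-\lambda_j}\,\sigma_x^2\log p
\]
with probability at least $1-\beta^*$. This rests on two sub-steps. The combinatorial core is an entrywise perturbation bound on the trimmed covariance $\widehat{C}^{(j)}$ against $C$: for any coordinate pair $(a,b)$ the retained index set $\Omega$ and the inlier index set both have cardinality $(1-\lambda_j)m$, so any retained product $X_{ai}X_{bi}$ has magnitude at most that of the largest inlier product; bounding the number of outliers that can survive the trim and the number of inliers that can be discarded (each at most $\lambda_j m$) controls $|\widehat C^{(j)}_{ab}-C_{ab}|$ by $O\!\bigl(\tfrac{\lambda_j}{1-\lambda_j}\bigr)$ times the typical magnitude of an inlier product. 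The probabilistic core then uses the sub-Gaussian design: products of sub-Gaussian coordinates are sub-exponential, so a union bound over the $p^2$ entries, together with concentration of the per-machine inlier subsample covariance around $C$, converts this into a spectral bound $\|\widehat C^{(j)}-C\|\lesssim\tfrac{\lambda_j}{1-\lambda_j}\sigma_x^2\log p$, the residual failure probability being absorbed into $\beta^*$. Finally the Davis--Kahan $\sin\Theta$ theorem, invoked with the eigengap $\Delta_d=\sigma_d-\sigma_{d+1}$, turns this into the displayed bound, supplying the factor $2/\Delta_d$.

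Second, I would assemble the global statement. Order the machines by outlier fraction and let the $\lfloor k(1-\gamma)\rfloor$ machines with smallest fractions be the ``good'' set $J$; then $|J|>(1-\gamma)k$ and every $j\in J$ has $\lambda_j\le\lambda'$, hence $\tfrac{\lambda_j}{1-\lambda_j}\le\tfrac{\lambda'}{1-\lambda'}$. Setting $\varepsilon:=\tfrac{2}{\Delta_d}\tfrac{\lambda'}{1-\lambda'}\sigma_x^2\log p$, the single-machine guarantee gives $\mathbb{P}(\|\widehat{\mathrm{Proj}_d}^{(j)}-\mathrm{Proj}_d^\star\|_F>\varepsilon)\le\beta^*$ for each $j\in J$, while the remaining $\gamma k$ machines (those with larger outlier fractions, or suffering node failure, latency or communication error) may be arbitrary. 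This matches the hypotheses of Theorem~\ref{theo:median_robust} with the given $\alpha$ and $0\le\gamma<\tfrac{\alpha-\beta^*}{1-\beta^*}$; since $\widetilde{\mathrm{Proj}_d}$ is the geometric median of the $\widehat{\mathrm{Proj}_d}^{(i)}$, that theorem yields $\|\widetilde{\mathrm{Proj}_d}-\mathrm{Proj}_d^\star\|_F<C_\alpha\varepsilon$ with probability at least $1-\exp(-k(1-\gamma)\psi(\tfrac{\alpha-\gamma}{1-\gamma},\beta^*))$, which is exactly the claim.

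The main obstacle is the per-machine covariance perturbation bound, that is, the coupling of the combinatorial trimming argument with sub-Gaussian concentration: one must state cleanly the cardinality-matching observation that makes the surviving outliers genuinely dominated by inlier magnitudes, and then carry out the passage from the entrywise estimate to the spectral norm and calibrate the constants in the $\sigma_x^2\log p$ term against the target failure probability $\beta^*$ so that the final radius is precisely $\tfrac{2}{\Delta_d}\tfrac{\lambda'}{1-\lambda'}\sigma_x^2\log p$. Everything downstream of the single-machine bound is a direct application of Lemma~\ref{lemma:median}/Theorem~\ref{theo:median_robust} and is routine.
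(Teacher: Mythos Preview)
Your proposal is correct and follows essentially the same route as the paper: a per-machine bound on $\widehat{\mathrm{Proj}_d}^{(j)}$ obtained by combining a covariance perturbation estimate with Davis--Kahan, followed by a direct application of Theorem~\ref{theo:median_robust} in the Frobenius Hilbert space. The only difference is that the paper does not re-derive the trimmed-covariance perturbation bound but simply imports it from \cite{chen2013robust} (their Theorem~4 gives $\|\widehat{C}-C^\star\|_\infty\le\tfrac{\lambda}{1-\lambda}\sigma_x^2\log p$ with high probability), so the ``main obstacle'' you flag is in fact already available as a black box and need not be reproved.
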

\begin{proof}
	According to the proof of Theorem 4 in \cite{chen2013robust}, we have, for the covariance matrix constructed in Algorithm~\ref{alg:rpca},
	\begin{equation*}
	\|\widehat{C}-C^\star\|_\infty \leq \frac{n_1\log p}{n}\sigma_x^2 = \frac{\lambda}{1-\lambda} \sigma_x^2\log p
	\end{equation*}
	with high probability. Let $\Delta_d = \sigma_d - \sigma_{d+1}$ be the eigenvalue gap, where $\sigma_d$ denotes the $d$th largest eigenvalue of $C$.
	Then, applying the Davis-Kahan perturbation theorem \cite{davis1970rotation}, we have, whenever $\|\widehat{C}-C\|_\infty \leq \frac{1}{4}\Delta_d$,
	\begin{equation}
	\label{eqn:bounded_proj}
	\|\widehat{\mathrm{Proj}_d}-\mathrm{Proj}_d^\star\|_\infty \leq \frac{2\|\widehat{C}-C\|_\infty}{\Delta_d} \leq \frac{2}{\Delta_d} \frac{\lambda}{1-\lambda}\sigma_x^2\log p.
	\end{equation}
	
	In the DRL-RPCA algorithm, the total $(n+n_1)$ samples are divided onto $k$ machines. Suppose the outlier fraction on the machine $ i $ is  $ \lambda_i, i = 1,\ldots,k $.
	
	
	Thus, the estimated covariance matrix $\widehat{C}^{(i)}$ on machines $ i=1,\ldots,k $ satisfies
	\begin{equation*}
	\|\widehat{C}^{(i)}-C^\star\|_\infty \leq  \frac{\lambda_i}{1-\lambda_i}\sigma_x^2 \log p. 
	\end{equation*}
	
	Substituting into Eqn.~\eqref{eqn:bounded_proj}, we obtain that the estimated projection matrix is bounded as,
	\begin{equation*}
	\|\widehat{\mathrm{Proj}_d}^{(i)} -\mathrm{Proj}_d^\star\|_\infty  \leq \frac{2}{\Delta_d} \|\widehat{C}^{(i)}-C\|_\infty \leq  \frac{2}{\Delta_d} \frac{\lambda_i}{1-\lambda_i}\sigma_x^2 \log p.
	\end{equation*}
	
	Let $\widetilde{\mathrm{Proj}_d} = \mathrm{median}\left(\widehat{\mathrm{Proj}_d}^{(1)},\ldots,\widehat{\mathrm{Proj}_d}^{(k)}\right)$ as in the DRL-RPCA algorithm.
	A direct application of Theorem~\ref{theo:median_robust} gives
	\begin{equation*}
	\|\widetilde{\mathrm{Proj}_d} -\mathrm{Proj}_d^\star\|_F \leq C_\alpha \frac{2}{\Delta_d} \frac{{\lambda^\prime}}{1-{\lambda^\prime}} \sigma_x^2 \log p,
	\end{equation*}
	where $ \lambda^\prime  $ is the $ \lfloor k (1-\gamma) \rfloor $ smallest outlier fraction in $ \{\lambda_1,\ldots,\lambda_k\} $.
	
\end{proof}
Basically,  Theorem \ref{theo:rpca} says that the performance of DRL-RPCA only depends on the $ \lfloor (1-\gamma) k\rfloor $ machines with smallest fraction of outliers, and is \emph{robust} to the breakdown of the other machines. 


\section{Example II: Distributed Robust Regression}
Here, we provide an example of distributed robust regression algorithm, also under the framework of DRL. The target is to estimate the underlying linear regression model $ \theta^\star $ given the observation pairs $ \{\mathbf{x}_i,y_i\}_{i=1}^{n+n_1} $ where $ n_1 $ samples are corrupted. Similar to the above robust PCA, we adopt the robustified thresholding (RoTR) regression~\cite{chen2013robust} as a base robust regression method (see Algorithm \ref{alg:rotr}). Integrating this underlying robust regression method into the framework in Algorithm~\ref{alg:drl} gives a new distributed robust linear regression algorithm DRL-RLR, whose implementation details are provided in Algorithm \ref{alg:dc_lr}.

\begin{algorithm}[h]
	\caption{Base Robust Regression - RoTR}
	\label{alg:rotr}
	\begin{algorithmic}
		\STATE {\bfseries Input:} Covariate matrix $X = [\mathbf{x}_1,\ldots,\mathbf{x}_{n+n_1}] \in \mathbb{R}^{p\times (n+n_1)}$ and response $\mathbf{y}\in\mathbb{R}^{n+n_1}$, outlier fraction $\lambda$ (set as $ 0.5 $ if unknown).
		\STATE For $j=1,\ldots,p$, compute $\widehat{\theta}(j)=\langle \mathbf{y}, X_j \rangle_{\lambda(n+n_1)}$.
		\STATE {\bfseries Output:} $\widehat{\theta}$.
	\end{algorithmic}
\end{algorithm}

\begin{algorithm}[h]
	\caption{DRL-RLR}
	\label{alg:dc_lr}
	\begin{algorithmic}
		\STATE {\bfseries Input:} Sample pairs $\{(\mathbf{x}_1,y_1),\ldots,(\mathbf{x}_{n+n_1},y_{n+n_1})\} \subset \mathbb{R}^{p} \times \mathbb{R}$, computing node number $k$, outlier fraction $\lambda = \frac{n_1}{n+n_1}$ (or set as $ 0.5 $ if unknown).
		\STATE Divide set evenly into $k$ subsets: $X_i=\{(\mathbf{x}_{(i-1)+1},y_{(i-1)+1}),\ldots,(\mathbf{x}_{ik},y_{ik}) \}, i = 1,\ldots,k$.
		\STATE \textbf{Do in parallel}
		\STATE $\widehat{\theta}^{(1)} = \mathrm{RoTR}(X_1,\lambda)$
		\STATE \hspace{8mm} $\vdots$
		\STATE $\widehat{\theta}^{(k)} = \mathrm{RoTR}(X_k,\lambda)$
		\STATE \textbf{End parallel}
		\STATE Communicate $\widehat{\theta}^{(1)},\ldots,\widehat{\theta}^{(k)}$ to an aggregating machine.
		\STATE {\bfseries Output:} $\widetilde{\theta}=\mathrm{median}\left(\widehat{\theta}^{(1)},\ldots,\widehat{\theta}^{(k)}\right)$
	\end{algorithmic}
\end{algorithm}

We also refer the readers to~\cite{chen2013robust} for more details about the RoTR algorithm. 

Similar to the DRL-RPCA, we have the following performance guarantee for DRL-RLR. Again we  defer the proof to the supplementary material.

\begin{theorem}[Performance of DRL-RLR]
\label{theo:dist_RoTR}
Suppose samples are divided onto $k$ machines, and their  outlier fractions are $\lambda_1,\ldots,\lambda_k $ respectively.
Let $\widetilde{\theta}$ be the output of DRL-RLR. If $0\leq \gamma<\frac{\alpha-\beta^*}{1-\beta^*}$, with probability of $1- \exp\left(-k(1-\gamma)\psi\left(\frac{\alpha-\gamma}{1-\gamma},\beta^*\right)\right)$, we have
\begin{equation*}
  \|\widetilde{\theta} -\theta^\star\|_2 \leq C'_\alpha \|\theta^\star\|_2\sqrt{1+\frac{\sigma_e^2}{\|\theta^\star\|_2^2}}\left(\frac{\lambda^\prime}{1-\lambda^\prime}\sqrt{p}\log p\right)
\end{equation*}
where  $ C_\alpha = (1-\alpha)\sqrt{\frac{1}{1-2\alpha}} $, $C'_\alpha $ is an absolute constant, and $ \lambda^\prime  $ is the $ \lfloor k (1-\gamma) \rfloor $ smallest outlier fraction in $ \{\lambda_1,\ldots,\lambda_k\} $.
\end{theorem}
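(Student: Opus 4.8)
The plan is to follow the same two-stage template as the proof of Theorem~\ref{theo:rpca}: first pin down a per-machine error bound for the base RoTR estimator (Algorithm~\ref{alg:rotr}) as a function of that machine's outlier fraction, and then aggregate the $k$ such bounds through the robustness guarantee of the geometric median, Theorem~\ref{theo:median_robust}, taking $\Theta=\mathbb{R}^p$ equipped with the Euclidean norm.

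First I would invoke the analysis of robustified thresholding regression from~\cite{chen2013robust}. On machine $i$, whose $\lfloor(n+n_1)/k\rfloor$ covariate--response pairs contain a fraction $\lambda_i$ of outliers, the coordinate-wise trimmed inner product $\widehat{\theta}^{(i)}(j)=\langle\mathbf{y},X_j\rangle_{\lambda(n+n_1)}$ concentrates around $\theta^\star(j)$; a union bound over the $p$ coordinates yields, with probability at least $1-\beta^*$ over the sub-Gaussian authentic design, an element-wise bound of the form $\|\widehat{\theta}^{(i)}-\theta^\star\|_\infty \lesssim \sqrt{\|\theta^\star\|_2^2+\sigma_e^2}\,\frac{\lambda_i}{1-\lambda_i}\log p$, where the factor $\sqrt{\|\theta^\star\|_2^2+\sigma_e^2}=\|\theta^\star\|_2\sqrt{1+\sigma_e^2/\|\theta^\star\|_2^2}$ is, up to constants, the scale of the response $y=\langle\theta^\star,\mathbf{x}\rangle+e$ and $\log p$ comes from the union bound in the concentration of the trimmed inner products. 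Passing from this $\ell_\infty$ bound to an $\ell_2$ bound over the $p$ coordinates multiplies the right-hand side by $\sqrt{p}$, so that $\|\widehat{\theta}^{(i)}-\theta^\star\|_2\le C\,\|\theta^\star\|_2\sqrt{1+\sigma_e^2/\|\theta^\star\|_2^2}\,\frac{\lambda_i}{1-\lambda_i}\sqrt{p}\log p$ with probability at least $1-\beta^*$, where $C$ is an absolute constant and $\beta^*$ is controlled by the per-machine sample size $\lfloor(n+n_1)/k\rfloor$ via sub-Gaussian concentration.

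Next I would order the machines so that $\lambda_{(1)}\le\cdots\le\lambda_{(k)}$ and take $J$ to index the $\lfloor k(1-\gamma)\rfloor$ machines with the smallest outlier fractions. Since $t\mapsto t/(1-t)$ is increasing on $[0,1)$, for every $j\in J$ the above bound holds with $\lambda_i$ replaced by $\lambda^\prime:=\lambda_{(\lfloor k(1-\gamma)\rfloor)}$, i.e.\ $\|\widehat{\theta}^{(j)}-\theta^\star\|_2\le\varepsilon:=C\,\|\theta^\star\|_2\sqrt{1+\sigma_e^2/\|\theta^\star\|_2^2}\,\frac{\lambda^\prime}{1-\lambda^\prime}\sqrt{p}\log p$, each with probability at least $1-\beta^*$, while the remaining $\gamma k$ estimates are treated as arbitrary. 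Applying Theorem~\ref{theo:median_robust} with this $\varepsilon$, this subset $J$, and the constraint $0\le\gamma<\frac{\alpha-\beta^*}{1-\beta^*}$ then gives $\|\widetilde{\theta}-\theta^\star\|_2<C_\alpha\varepsilon$ with probability at least $1-\exp(-k(1-\gamma)\psi(\frac{\alpha-\gamma}{1-\gamma},\beta^*))$; absorbing $C_\alpha$ and $C$ into the constant $C'_\alpha$ yields the stated inequality.

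The main obstacle is the first step: extracting from~\cite{chen2013robust} a per-machine bound with exactly the advertised dependence on $\lambda_i$, $p$, $\sigma_e$ and $\|\theta^\star\|_2$, and in particular verifying that the coordinatewise control of the trimmed inner products aggregates to an $\ell_2$ bound with only the claimed $\sqrt{p}\log p$ blow-up rather than something larger. A secondary, minor point is the rounding: Theorem~\ref{theo:median_robust} requires strictly more than a $(1-\gamma)$ fraction of good machines, so one must reconcile $\lfloor k(1-\gamma)\rfloor$ with the $(1-\gamma)k$ threshold, absorbing the discrepancy into the strict inequality $\gamma<\frac{\alpha-\beta^*}{1-\beta^*}$, exactly as in the proof of Theorem~\ref{theo:rpca}.
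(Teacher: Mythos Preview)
Your proposal is correct and follows essentially the same two-step approach as the paper: a per-machine RoTR error bound combined with Theorem~\ref{theo:median_robust}. The only difference is that the paper simply quotes the per-machine $\ell_2$ bound as a black-box lemma from~\cite{chen2013robust} (stated immediately before the proof as Lemma~\ref{lemma:RoTR}, with failure probability $p^{-2}$), rather than re-deriving it from the coordinatewise $\ell_\infty$ analysis as you propose; so what you flag as the ``main obstacle'' is in fact already packaged for you, and the paper's proof is then a one-line application of Theorem~\ref{theo:median_robust} with $C'_\alpha=C_\alpha c$.
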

Also, we can see that the performance of DRL-RoTR only depends on the $ \lfloor (1-\gamma) k\rfloor $ machines with smallest fraction of outliers, and is \emph{robust} to the breakdown of the other machines.
Before proving Theorem~\ref{theo:dist_RoTR}, we first show the following performance guarantee for RoTR algorithm from \cite{chen2013robust}.
The estimation error of the RoTR is bounded as in Lemma~\ref{lemma:RoTR}.
\begin{lemma}[Performance of RoTR~\cite{chen2013robust}]
	\label{lemma:RoTR}
	Suppose the samples $ \mathbf{x} $ are from sub-Gaussian design with $\Sigma_x = I_p$, with dimension $ p $ and noise level $ \sigma_e $, then the following holds with probability at least $1-p^{-2}$. The output of RoTR satisfies the $\ell_2$ bound:
	\begin{equation*}
	\left\|\widehat{\theta}-\theta^\star \right\|_2 \leq c\|\theta^\star\|_2\sqrt{1+\frac{\sigma_e^2}{\|\theta^\star\|_2^2}}\left(\frac{\lambda}{1-\lambda}\sqrt{p}\log p\right).
	\end{equation*}
	Here $c$ is a constant independent of $p,n,\lambda$.
\end{lemma}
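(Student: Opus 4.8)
The plan is to reduce the $\ell_2$ bound to a uniform coordinate-wise estimate and then handle each coordinate through a deterministic trimming argument combined with sub-exponential concentration. First I would observe that RoTR decouples across coordinates: the $j$-th output $\widehat\theta(j)$ is the trimmed inner product of the response $\mathbf y$ with the $j$-th covariate row $X_j$, computed independently of the other coordinates. Under the design assumption $\Sigma_x = I_p$ and the linear model $y_i = \langle\theta^\star,\mathbf x_i\rangle + e_i$ on the inliers, the population quantity identifies the target exactly, since $\mathbb E[y\,x_j] = \sum_\ell \theta^\star(\ell)\,\mathbb E[x_\ell x_j] + \mathbb E[e\,x_j] = \theta^\star(j)$. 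Hence, after normalizing the trimmed sum by the number $n$ of retained terms, it suffices to control $|\widehat\theta(j) - \theta^\star(j)|$ uniformly over $j$ and then invoke $\|\widehat\theta - \theta^\star\|_2 \le \sqrt p\,\max_j|\widehat\theta(j)-\theta^\star(j)|$, which supplies the $\sqrt p$ factor in the stated bound.

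The core of the per-coordinate bound is a deterministic trimming lemma, as in the proof of the main theorem of \cite{chen2013robust}. Write $q_i = y_i X_{ji}$. The adversary sets at most $n_1$ of these products arbitrarily, while RoTR discards the $n_1$ largest-magnitude products before summing. I would argue that, for \emph{any} adversarial corruption, the trimmed sum over the retained index set $\Omega$ differs from the clean inlier population sum by a quantity bounded in terms of the order statistics of the clean products: removing the $n_1$ largest magnitudes guarantees that every surviving outlier contributes no more than the largest retained clean product, while the price of also discarding up to $n_1$ genuine inlier products is at most the sum of the $n_1$ largest-magnitude clean products. Thus the total coordinate-wise deviation is controlled by the sum of the $O(n_1)$ largest-magnitude clean products $|q_i|$, a quantity that depends only on the inlier distribution and not on the adversary.

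It then remains to bound this order-statistic sum with high probability. Since $\mathbf x_i$ is sub-Gaussian and $y_i$ is sub-Gaussian with scale $\sqrt{\|\theta^\star\|_2^2 + \sigma_e^2}$, each clean product $q_i = y_i X_{ji}$ is sub-exponential with parameter proportional to $\|\theta^\star\|_2\sqrt{1 + \sigma_e^2/\|\theta^\star\|_2^2}$. Standard tail control for sub-exponential variables then bounds the sum of the $n_1$ largest magnitudes, with probability at least $1 - p^{-2}$, by $n_1\,\|\theta^\star\|_2\sqrt{1+\sigma_e^2/\|\theta^\star\|_2^2}\log p$; the $\log p$ factor and the $p^{-2}$ failure probability both arise from a union bound over the $p$ coordinates. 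Dividing by $n$ turns $n_1/n$ into $\lambda/(1-\lambda)$ and yields exactly the per-coordinate bound $c\,\|\theta^\star\|_2\sqrt{1+\sigma_e^2/\|\theta^\star\|_2^2}\,\frac{\lambda}{1-\lambda}\log p$.

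The main obstacle is the deterministic trimming step: one must show, uniformly over every admissible adversarial corruption and simultaneously over all $p$ coordinates, that trimming the largest magnitudes both neutralizes the outliers and incurs only a controlled bias from the inliers it inadvertently removes. Coupling this worst-case combinatorial bound with sharp concentration of the sub-exponential order statistics is precisely the delicate part, and since the estimator here coincides with the robust thresholding estimator of \cite{chen2013robust}, I would invoke their trimming analysis rather than re-derive it, leaving only the $\ell_2$ aggregation across coordinates to complete the proof.
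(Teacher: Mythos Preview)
The paper does not prove this lemma at all: it is imported verbatim from \cite{chen2013robust} and used as a black box in the subsequent proof of Theorem~\ref{theo:dist_RoTR}. Your sketch is a faithful reconstruction of the argument one expects in the cited reference---coordinate decoupling, the deterministic trimming inequality bounding the adversarial effect by the top-$O(n_1)$ inlier order statistics, sub-exponential concentration for the products $y_iX_{ji}$, and a union bound over the $p$ coordinates---and you correctly flag that the trimming analysis is the delicate step you would cite rather than redo. There is therefore nothing in the present paper to compare your approach against; your proposal simply supplies what the paper omits by citation.
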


\begin{proof}
Based on the results in the above Lemma and Theorem~\ref{theo:median_robust}, it is straightforward to get:
\begin{equation*}
\|\widetilde{\theta} -\theta^\star\|_2 \leq C'_\alpha \|\theta^\star\|_2\sqrt{1+\frac{\sigma_e^2}{\|\theta^\star\|_2^2}}\left(\frac{\lambda'}{1-\lambda'}\sqrt{p}\log p\right)\\
\end{equation*}
where $C'_\alpha = C_\alpha c$ with $c$ being the constant in Lemma~\ref{lemma:RoTR},  $ C_\alpha = (1-\alpha)\sqrt{\frac{1}{1-2\alpha}} $, and $ \lambda^\prime  $ is the $ \lfloor k (1-\gamma) \rfloor $ smallest outlier fraction in $ \{\lambda_1,\ldots,\lambda_k\} $.
\end{proof}

\section{Simulations}
We devote this section to comparing the distributed robust learning (DRL) algorithms, including distributed RPCA and distributed robust linear regression (RLR),  with their centralized counterparts.

\begin{figure}[t!]
	\centering
	\subfigure[PCA]{
		\label{fig:drpca}
		\includegraphics[width=0.45\textwidth]{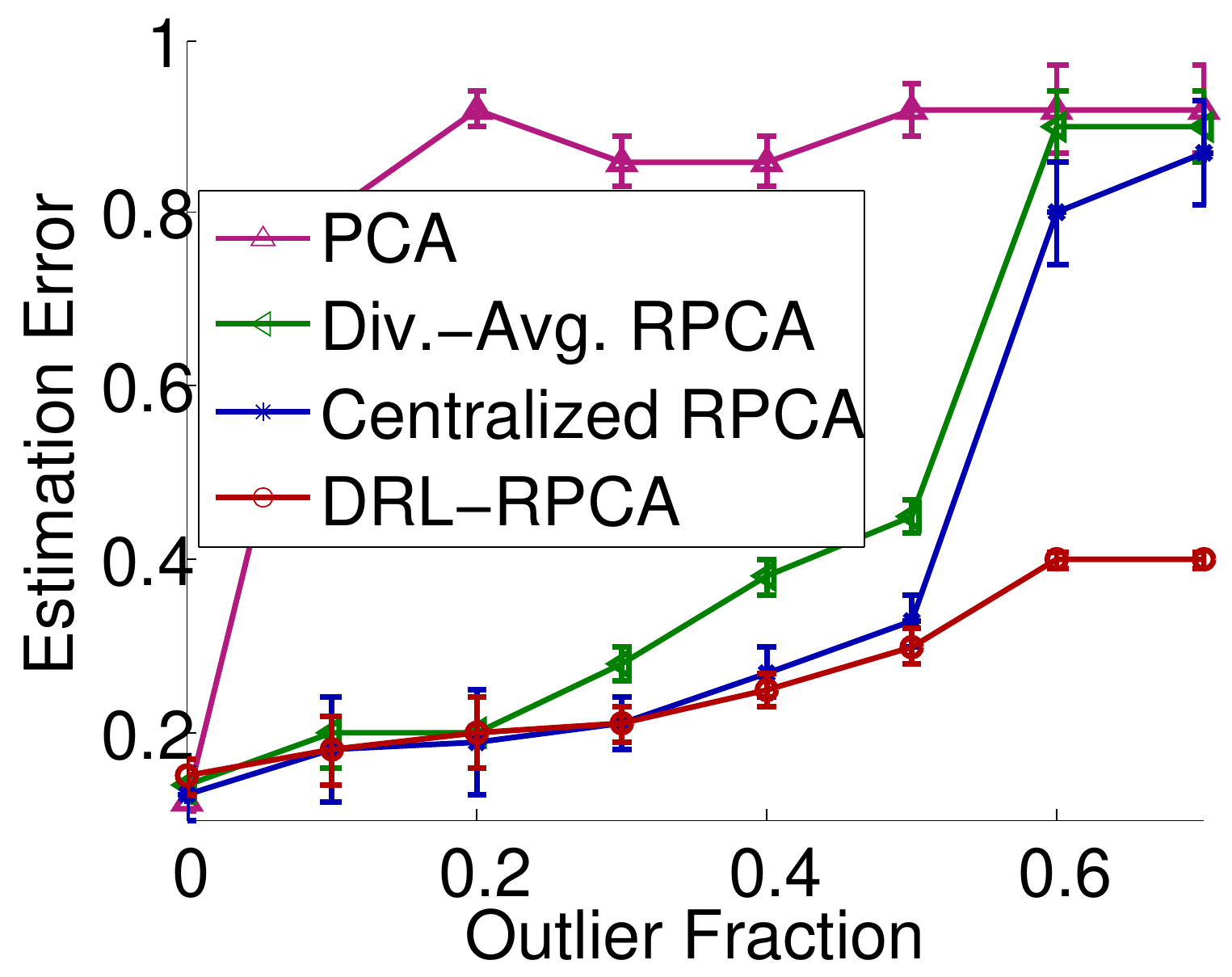}
	}
	\subfigure[LR]{
		\label{fig:drlr}
		\includegraphics[width=0.45\textwidth]{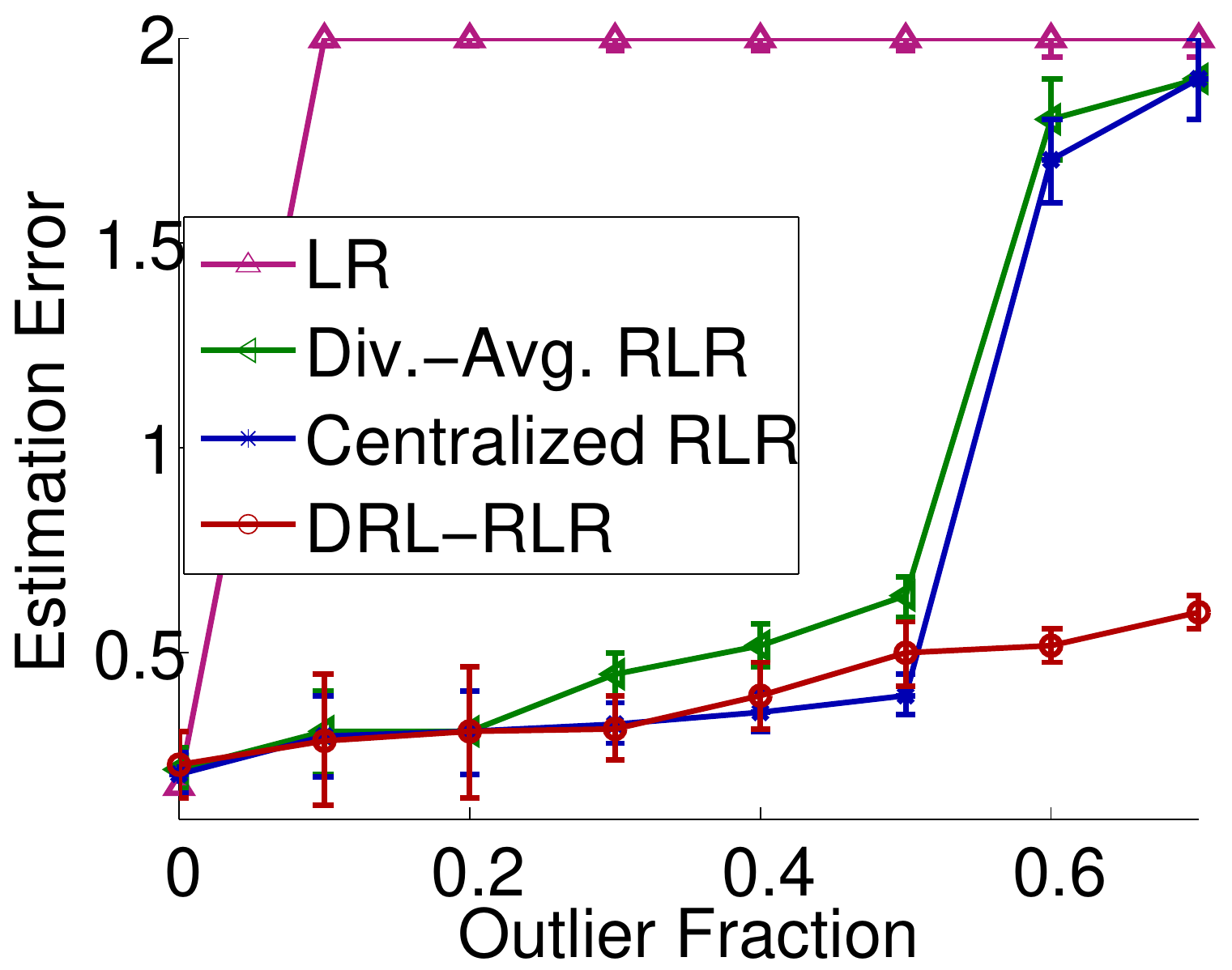}
	}
	\caption{Performance comparison between distributed, centralized and division-averaging robust  methods for (a) PCA and (b) LR, along with  standard non-robust ones. For both problems, $\sigma_e = 1$, $\sigma_o=10$, $p=100$, $n=1\times 10^{6}$ and $k=100$. For PCA,  $d=5$.
	}
	\label{fig:result}
\end{figure}

\paragraph{Synthetic data}
In  simulations of the PCA problem, samples are generated according to $\mathbf{x}_i = \theta^\star\mathbf{z}_i + \mathbf{v}_i$. Here the signal $\mathbf{z}_i \in \mathbb{R}^d$ is sampled from normal distribution: $\mathbf{z}_i \sim \mathcal{N}(0,I_d)$. The noise $\mathbf{v}_i \in \mathbb{R}^p$ is sampled as:  $\mathbf{v}_i \sim \mathcal{N}(0,\sigma_e I_p)$. The underlying matrix $\theta^\star \in \mathbb{R}^{p\times d}$ is randomly generated whose columns are then orthogonalized. The entries of outliers $\mathbf{x}_o \in \mathbb{R}^p$ are i.i.d.\ random variables from uniform distribution $[-\sigma_o,\sigma_o]$.
We use the distance between two projection matrices to measure the subspace estimation error: $\|\widetilde{\mathrm{proj}}-\mathrm{proj}^\star\|_F/\|\mathrm{proj}^\star\|_F$. Here $\widetilde{\mathrm{proj}}$ is the output estimates and $\mathrm{proj}^\star=\theta^\star {\theta^\star}^\top$.

In  simulations of the LR problem, samples $(\mathbf{x}_i,y_i)$ are generated according to $y_i = {\theta^\star}^\top \mathbf{x}_i + v_i$. Here the model parameter $\theta^\star$ is randomly sampled from $\mathcal{N}(0,I_p)$ , and $\mathbf{x}_i \in \mathbb{R}^p$ is also sampled from normal distribution: $\mathbf{x}_i \in \mathcal{N}(0,I_p)$. The noise $v_i \in \mathbb{R}$ is sampled as:  $v_i \sim \mathcal{N}(0,\sigma_e)$. The entries of outlier $\mathbf{x}_o$ are also i.i.d.\ randomly sampled from uniform distribution $[-\sigma_o,\sigma_o]$. The response of outlier is generated by $y_o = -{\theta^\star}^\top \mathbf{x}_o + v$. We use $\|\theta^\star-\widetilde{\theta}\|_2/\|\theta^\star\|_2$ to measure the error. Here $\widetilde{\theta}$ is the output estimate.

We conduct simulations with varying outlier fraction $\lambda$ from $ 0 $ to $ 0.7 $, in order to investigate the robustness of DRL with different sample contaminating degree. When $\lambda \leq 0.5$, outliers are uniformly distributed on the machines and thus the outlier fraction is around $\lambda$ on each machine. When $\lambda >0.5$, outliers are not uniformly distributed. Instead, when $\lambda=0.6$, on $\gamma = 50\%$ of the machines, the outlier fraction is $0.8$, while on the other $50\%$ machines the outlier fraction is $0.4$. Similarly, when $\lambda=0.7$, the outlier fractions are $0.9$ (on $50\%$ of the machines) and $0.5$ (on the other $50\%$ of the machines) respectively. These two adversarial cases are designed to demonstrate the additional robustness gain brought by DRL. All the simulations  are repeated for $10$ times. The average and variance of the estimation errors are plotted in Fig.~\ref{fig:drpca} and Fig.~\ref{fig:drlr} respectively.

The simulations are implemented on a PC with $2.83$GHz Quad CPU and $8$GB RAM. It takes centralized RPCA around $60$ seconds to handle $1\times 10^6$ samples with dimensionality of $100$. In contrast, distributed RPCA only costs $0.6$ seconds by using $k=100$ parallel procedures.
The communication cost here is negligible since only $100\times 5$ eigenvector matrices are communicated. For RLR simulations, we also observe about $k$ (here $k=100$) times improvement on time efficiency.

As for the performance, from Fig.~\ref{fig:drpca}, we observe that when $\lambda \leq 0.5$, DRL-RPCA, RPCA with division-averaging (Div.-Avg.\ RPCA) and centralized RPCA  achieve similar performances, which are much better than non-robust standard PCA. When $ \lambda = 0 $, \emph{i.e.}, when there are no outliers, the performances of DRL-RPCA and Div.-Avg.\ RPCA are slightly worse than standard PCA as the  quality of each mini-batch estimate deteriorates  due to the smaller sample size. However, distributed algorithms of course offer significant higher efficiency. Similar observations also hold for LR simulations from Fig.~\ref{fig:drlr}. Actually, standard PCA and LR begin to break down when $\lambda =0.1$. These results demonstrate that DRL preserves the robustness of centralized algorithms well. 

When outlier fraction $\lambda$ increases to $0.6$, centralized  (blue lines) and division-averaging algorithms (green lines) break down sharply, as the outliers outnumber their maximal breakdown point of $0.5$. In contrast, DRL-RPCA and DRL-RLR still present strong robustness and perform much better, which demonstrate that the DRL framework is indeed robust to computing nodes breaking down, and even enhances the robustness of the base robust learning methods under favorable outlier distributions across the machines.

\paragraph{Comparison with averaging} Taking the average instead of the geometric median is a natural alternative to DRL. Here we provide more simulations for the  RPCA problem  to compare these two different aggregation strategies in the presence of different errors on the computing nodes.

In distributed computation of learning problems, besides outliers, significant deterioration of the performance may result from unreliabilities, such as  latency of some machines or  communication errors. For instance, it is not uncommon that machines solve their own sub-problem at different speed, and sometimes users may require to stop the learning before all the machines output the final results. In this case, results from the slow machines are possibly not accurate enough and may hurt the quality of the aggregated solution. Similarly, communication errors may also damage the overall performance.

We simulate the machine latency by stopping the algorithms once over half of the machines finish their computation. To simulate communication error, we randomly sample $ k/10 $ estimations and flip the sign of $ 30\% $ of the elements in these estimations. The estimation errors of the solution aggregated by averaging and DRL are given in Table \ref{tab:avg-compare}. Clearly, DRL offers stronger resilience to unreliability of the computing nodes.

\begin{table}
	\caption{Comparisons of the estimation error for PCA between Division-Averaging (Div.-Avg.) and DRL, with machine latency and communication errors. Under the same parameter setting as Figure~\ref{fig:drpca}. Outlier fraction $ \lambda = 0.4 $. The average and std of the error from $ 10 $ repetitions are reported.}
	\label{tab:avg-compare}
	\centering
	\begin{tabular}{c|c|c}
		\hline
		Unreliability Type  & DRL & Div.-Avg. \\
		\hline
		\hline
		Latency & $ 0.26 \pm 0.01 $ & $ 0.42 \pm 0.01 $  \\
		\hline
		Commu. Error &$ 0.31 \pm 0.03 $  & $ 0.78 \pm 0.02  $   \\
		\hline
	\end{tabular}
\end{table}

\paragraph{Real large-scale data} We further apply the DRL-LR to solve an image classification problem on a recently released large scale image set~--~the Flickr image set\footnote{\url{http://webscope.sandbox.yahoo.com/catalog.php?datatype=i&did=67}}. This data set contains around $ 1 \times 10^8 $ images with users contributed tags. Note that this data set is actually quite noisy, in the sense of both the noisy tag annotations provided by users and the cluttered image contents. We employ linear regression to do the tag prediction over $ 200 $ tags, using the  $ 4{,}096$-dimensional deep CNN features \cite{jia2014caffe}. This large scale regression task is almost impossible for a single PC (with common $ 8 $GB memory), as simply storing the features needs around $ 50 $GB.
We randomly sample from the entire dataset  a training set of $ 0.5\times 10^8 $ images and a test set of $ 0.1\times 10^8 $ images.
In the distributed implementation, the training set is divided into $ 1{,}000 $ subsets of $ 5\times 10^5 $ images.  We compare the DRL-LR with the Division-Averaging LR to investigate the performance benefit from the robustness advantage of DRL, as well as the computation time comparison.   The results are provided in Table \ref{tab:flickr-acc}, which demonstrate  DRL-LR achieves lower  error, with a margin of $ 0.03 $, compared with Division-Averaging LR and the computation time cost due to adopted geometric median aggregations is actually negligible.

\begin{table}
	\caption{Tag prediction error  comparisons among DRL-LR, Distributed LR with division-averaging (Div.-Avg.\ LR) on the Flickr data set. The average and standard deviation of MAP's, along with computation time (in secs.), on $ 10 $ random training and test splits are reported.}
	\label{tab:flickr-acc}
	\centering
	\begin{tabular}{c|c|c}
		\hline
		  & DRL-LR & Div.-Avg. LR \\
		\hline
		\hline
		MAP & $ 0.56 \pm 0.02 $ & $ 0.59 \pm 0.01 $  \\
		\hline
		Time (secs.) & $ 3{,}002 \pm 14 $ & $ 2{,}957 \pm 5 $ \\
		\hline
	\end{tabular}
\end{table}

\section{Conclusions}

 We developed a generic  Distributed Robust Learning (DRL) framework that processes the data subsets in parallel  and aggregates  results from different subsets by taking the geometric median. DRL not only significantly enhances the time and memory efficiency of robust learning  but also preserves the robustness of the base learning algorithms. In addition, DRL was shown to bring additional resilience to  latency and breakdown of computing nodes and communication error between the nodes. Moreover, when the outliers are not uniformly distributed, the proposed framework is still robust to adversarial outliers distributions. We  provided two concrete examples, distributed robust principal component analysis and distributed robust regression, to demonstrate how DRL works.

\bibliographystyle{plain}
\bibliography{drl}

\end{document}